\theoremstyle{plain} 
\newtheorem{theorem}{Theorem}
\newtheorem{corollary}{Corollary}
\newtheorem{lemma}{Lemma}
\newtheorem{proposition}{Proposition}
\theoremstyle{definition} 
\newtheorem{definition}{Definition}
\title{Computational Irreducibility as the Foundation of Agency: A Formal Model Connecting Undecidability to Autonomous Behavior in Complex Systems}
\author{Poria Azadi \\ Tehran, Iran \\ \href{mailto:poria.azadi96@gmail.com}{poria.azadi96@gmail.com} \\ ORCID: 0000-0002-7692-6833}
\date{} 
\begin{document}

\maketitle

\begin{abstract}
This article presents a formal model demonstrating that genuine autonomy, the ability of a system to self-regulate and pursue objectives, fundamentally implies computational unpredictability from an external perspective. We establish precise mathematical connections, proving that for any truly autonomous system, questions about its future behavior are fundamentally undecidable. This formal undecidability, rather than mere complexity, grounds a principled distinction between autonomous and non-autonomous systems. Our framework integrates insights from computational theory and biology, particularly regarding emergent agency and computational irreducibility, to explain how novel information and purpose can arise within a physical universe. The findings have significant implications for artificial intelligence, biological modeling, and philosophical concepts like free will.
\end{abstract}

\noindent\textbf{Keywords:} Agency, Autonomy, Computational Constraints, Undecidability, Complex Systems

\section{Introduction}
The concepts of agency and autonomy, pertaining to a system's ability to function effectively, pursue objectives, and self-regulate, are central inquiries across biology, cognitive science, artificial intelligence, and philosophy \cite{moreno2015, hofmann2024}. Furthermore, previous formalization attempts, largely focused on logical or probabilistic frameworks, have frequently overlooked the inherent limitations imposed by computational constraints on a system's capacity to process information and forecast its environment \cite{froese2007, barandiaran2009}.

This article posits that a deeper understanding of agency can be achieved by examining the fundamental constraints of computation and logic within complex systems. Building upon insights from Gödel's incompleteness theorems \cite{godel1931}, Turing's work on decidability and computability \cite{turing1936}, and concepts from thermodynamics and information theory, we formulate a novel explanation. Our core thesis is that genuine autonomy necessarily implies unpredictability from an external perspective: for any truly autonomous system, there exist questions about its future behavior that are fundamentally undecidable. This provides a principled distinction between autonomous and non-autonomous systems without appealing to non-physical properties. We contend that agency specifically emerges in systems operating at the threshold of decidability. Here, Gödel-like constraints manifest as the system's inability to internally represent complete predictive models of its surroundings \cite{hofstadter2007}, while Turing-like undecidability signifies the inability to algorithmically ascertain certain future states without executing the entire computation. This leads to computational irreducibility, which generates an information gradient crucial for adaptive behavior \cite{chaitin1987}, implying that effective prediction necessitates direct simulation. We argue that this formal necessity is pivotal for comprehending authentic agency, requiring systems to generate novel computational solutions rather than merely retrieving pre-computed responses.

To explore this, we propose a formal model of a "minimal agent" interacting with an environment, clarifying essential concepts such as decidability, completeness, predictability, computational irreducibility, entropy, and information within this framework. We then introduce and formally prove theoretical assertions demonstrating how the undecidability of specific inquiries regarding an agent's long-term goal-achieving behavior leads to its computational irreducibility, resulting in a lack of external predictability and the continuous generation of new information. This highlights the inherent need for autonomous action (agency) for the agent's survival or goal accomplishment.

To avoid circular reasoning and clarify the logical dependencies in our argument, we present the following logical structure of definitions and theorems:
We begin with fundamental definitions (Definition 1 - Definition 4) of decidability, completeness, and computational irreducibility from established literature in theoretical computer science.
We then propose Definition 4 of autonomy based on specific computational properties that can be independently verified, without presupposing Turing-completeness or undecidability.
Theorem 1 then establishes that an autonomous agent, as defined in Definition 4, must necessarily be Turing-complete if it aims to achieve arbitrary goals.
Corollary 1 follows from Theorem 1, showing that autonomous systems must be capable of performing computationally irreducible processes.
Theorem 2 demonstrates that for autonomous agents, there exist undecidable questions about their goal-achieving behavior.
Theorem 3 establishes the formal connection between undecidability and computational irreducibility.
Finally, Theorem 2 (Autonomy Implies Undecidability), our central result, provides the formal proof that autonomy necessarily implies the existence of undecidable properties about the agent's behavior.

The overall structure of the paper is as follows: Section 1 introduces the topic and outlines the logical framework. Section 2 establishes the mathematical foundation and defines key concepts. Section 3 introduces the formal model of a minimal agent and its biological foundations. Section 4 presents our theoretical results, including proofs for autonomy implying Turing-completeness, undecidability, and computational irreducibility. Section 5 discusses emergent agency with illustrative examples. Section 6 details the theoretical implications and limitations. Finally, Section 7 outlines future research directions and concludes the paper.

\section{Foundational Concepts and Definitions}
\begin{definition}[Decidability]\label{def:decidability}
A decision problem, denoted as $P$, is decidable if and only if there exists a Turing machine, $M$, such that:
\begin{enumerate}
    \item $M$ halts on every input $w$.
    \item $M$ accepts $w$ if and only if $w$ is a positive instance of $P$.
    \item $M$ rejects $w$ if and only if $w$ is a negative instance of $P$.
\end{enumerate}
Formally, a total computable function $f:\Sigma^* \to \{0,1\}$ must exist. For all inputs $w \in \Sigma^*$ (where $\Sigma^*$ is the set of all possible strings over a finite alphabet), this function $f(w)=1$ if $w \in P$ and $f(w)=0$ if $w \notin P$.

In the context of agent behavior, a property $P$ of an agent-environment system is decidable if an algorithm exists that, given a description of the agent, environment, and initial conditions, can determine in finite time whether property $P$ will ever hold during the system's evolution \cite{sipser2012}. For environments exhibiting sufficient complexity (e.g., those equivalent to Class 4 cellular automata as described by Wolfram \cite{wolfram2002}), determining certain future properties can be provably undecidable. This is often shown via reduction to the Halting Problem \cite{wolfram1985}, which asks whether an arbitrary program will finish running or continue to run forever.
\end{definition}

\begin{definition}[Completeness]\label{def:completeness}
A formal system $S$ is complete if and only if for every statement $\phi$ formulated in the language of $S$, either the statement itself or its negation is provable within the system $S$. Formally, for all statements $\phi$ in the language $L(S)$ of the system, either $S \vdash \phi$ (meaning $\phi$ is provable in $S$) or $S \vdash \neg\phi$ (meaning $\neg\phi$ is provable in $S$). A complete system, therefore, can, in principle, decide the truth or falsity of any statement expressible in its language.

For an agent system, let $S_A$ be the agent's internal representational system (e.g., its knowledge base and reasoning rules), and let $S_E$ represent the set of true statements about the environment. Completeness in this context means that for every truth about $S_E$ that is necessary for the agent's goal achievement, either $\phi$ or $\neg\phi$ is derivable within $S_A$. However, Gödel's first incompleteness theorem \cite{godel1931} states that any consistent formal system $S$ capable of representing elementary arithmetic contains statements that are true but unprovable within $S$. Therefore, for any sufficiently complex agent possessing a finite representational capacity, there will exist true environmental statements relevant to its goals that cannot be derived (proven true or false) using only its internal model \cite{hofstadter2007}.
\end{definition}

\begin{definition}[Computational Irreducibility]\label{def:computational_irreducibility}
A process $P: I \to O$ (mapping inputs $I$ to outputs $O$) is computationally irreducible if and only if, for almost all inputs $i \in I$, there exists no significantly more efficient algorithm to compute $P(i)$ than directly simulating the process $P$ itself. In other words, one cannot find a "shortcut" to predict the outcome without essentially running the process.

Formally, a process $P$ is computationally irreducible if, for almost all inputs $i \in I$, the conditional Kolmogorov complexity $K(P(i)|i)$ approaches the computational resources required to execute P on i. The conditional Kolmogorov complexity $K(x|y)$ is the length of the shortest computer program that produces output $x$ given input $y$. Specifically, for a universal Turing machine $U$, a process $P$ is computationally irreducible if there exists a constant $c$ such that for almost all inputs $i \in I$:
\begin{equation}
K_U(P(i)|i) \geq \text{TimeComplexity}_P(i) - c
\end{equation}
Here, $K_U(P(i)|i)$ is the conditional Kolmogorov complexity of the output $P(i)$ given the input $i$, and $\text{TimeComplexity}_P(i)$ is the time complexity (number of computational steps) of executing process $P$ on input $i$. This definition captures the notion that no significantly shorter description or faster method exists to determine the outcome of $P$ without performing the computation itself \cite{wolfram2002, wolfram1985}. This concept is central to understanding the limits of prediction in complex systems and potentially links to thermodynamic principles \cite{zenil2021}. Note that this definition focuses on the time or effort required for prediction.

\textbf{Link to Predictability:} Undecidability directly implies limitations on predictability. If determining whether a future property will hold is an undecidable problem, then no general algorithm exists that can always predict it. Computational irreducibility implies practical unpredictability because any prediction requires a simulation of complexity comparable to the process itself. We can formalize predictability using resource-bounded computation. Let $\text{Pred}_r(s,t)$ be a prediction function that uses at most $r$ computational resources (e.g., time steps) to predict the state of a system $s$ at a future time $t$. The prediction efficiency $\eta(r,t,s) = D(\text{Actual}(s,t), \text{Pred}_r(s,t))/r$ measures the accuracy of the prediction (where $D$ is a distance metric comparing the predicted state to the actual state) per unit of resource used. Computational irreducibility implies that for a fixed amount of computational resources $r$, the prediction efficiency $\eta(r,t,s)$ tends to zero as the prediction horizon $t$ increases, for almost all initial states $s$. Formally:
\[
\lim_{t \to \infty} \eta(r,t,s) = 0 \quad \text{for fixed } r \text{ and almost all } s.
\]
\end{definition}

\begin{definition}[Autonomy -- Computationally Grounded]\label{def:autonomy}
An agent $A$ interacting with an environment $E$ exhibits computationally grounded autonomy if and only if:
\begin{enumerate}
    \item \textbf{Internal State Independence:} Agent $A$ possesses an internal state space $S_A$ and transition function $T_A$ that can maintain and update internal states not solely determined by immediate environmental inputs. This enables the agent to operate based on its history and internal logic rather than just reactive responses.
    \item \textbf{Generative Decision-Making:} The agent's transition function $T_A$ includes a component $g(s_A)$ derived from the agent's internal logic, allowing it to generate actions not fully predetermined by external stimuli. This function enables the agent to act as a source of novel behaviors.
    \item \textbf{Effective Environmental Coupling:} The agent has the ability to read from and write to its environment, thus establishing a bidirectional information flow between its internal states and the external world.
\end{enumerate}
This definition characterizes autonomy through the agent's operational properties without presupposing any specific computational power.
\end{definition}

\begin{definition}[Operational Closure Formal]\label{def:operational_closure}
A system $S$ is operationally closed if and only if:
\begin{enumerate}
    \item \textbf{Structural Self-Reference:} System $S$ possesses a set of processes $P$ operating in a recursive network, such that for every process $p \in P$, the inputs to $p$ are produced by the outputs of other processes within $P$.
    \item \textbf{Dynamic Independence:} A transition function $T_S: S \times E \to S$ exists, where S is the set of system states and E is the set of environmental states, such that: $\forall s \in S, \forall e, e' \in E: \exists \delta > 0$ such that if $\lVert e - e' \rVert < \delta$, then $\lVert T_S(s,e) - T_S(s,e') \rVert < \epsilon$. This allows the system to maintain relative stability against small environmental perturbations.
    \item \textbf{Algorithmic Limitation:} There exists no algorithm $\mathcal{A}$ with computational resources $r$ such that for any environmental state $e \in E$ and any time $t > t_0$: $\text{Pred}_{\mathcal{A},r}(S,e,t) = \text{Actual}_S(e,t)$, where $\text{Pred}_{\mathcal{A},r}$ is algorithm $\mathcal{A}$'s prediction of the system's state and $\text{Actual}_S$ is the system's actual state. This implies that no algorithm can accurately and more efficiently predict the system's behavior over time than direct simulation.
    \item \textbf{State-Space Distinction:} The set $S$ comprises states that are topologically distinct from $E$, meaning there exists a boundary $M: S \times E \to \{0,1\}$ indicating which variables belong to the system and which to the environment.
    \item \textbf{Core Stability:} A subset $C \subset S$ of core states exists such that for any $c \in C$, the system tends to remain in states near C, and any deviation from C is met with compensatory mechanisms.
\end{enumerate}
\end{definition}

\begin{definition}[Emergent Agency Formal]\label{def:emergent_agency}
Capacity to reliably achieve/maintain goal $G$ in an irreducible environment via processes not fully predictable externally, emerging from real-time interaction \cite{klyubin2005, wang2024}.
\end{definition}

\begin{definition}[Computational Sourcehood]\label{def:computational_sourcehood}
An agent $A$ exhibits computational sourcehood if and only if:
\begin{enumerate}
    \item \textbf{Prediction Requirement:} Any algorithm $P$ that accurately predicts $A$'s behavior must implement a computational structure that preserves essential features of $A$'s internal computational organization.
    \item \textbf{Formal Structure Preservation:} There exists a mapping function $f$ from $P$'s computational states to $A$'s internal states that preserves the causal structure of $A$'s computation.
    \item \textbf{Work Cycle Implementation:} The agent implements work cycles in Kauffman and Clayton's sense \cite{kauffman2006}, harnessing energy from the environment to maintain its organization.
    \item \textbf{Organic Code Implementation:} The agent employs organic codes in Barbieri's sense \cite{barbieri2003}, establishing arbitrary correspondences between independent domains.
    \item \textbf{Irreducible Representation:} There exists no algorithm $Q$ that can predict $A$'s behavior with equal accuracy while implementing a computational structure that does not preserve these essential features.
\end{enumerate}
This definition formalizes Kauffman and Clayton's philosophical claim that "no theory in physics or chemistry can predict the emergence of meanings and purposes" \cite{kauffman2006} by grounding it in the computational concepts of undecidability and irreducibility established in this paper. Barbieri's code biology provides a complementary framework for understanding emergent agency through the lens of organic codes. According to Barbieri (2008) \cite{barbieri2008}, life fundamentally depends on semiosis, defined as the production of signs and codes. Unlike interpretive semiosis that requires consciousness, organic semiosis operates through coding rules that establish correspondence between two independent worlds. This aligns with our computational model by demonstrating how meaning and functional information can emerge from rule-based systems without requiring interpretive agents \cite{barbieri2008}.
\end{definition}

\section{Minimal Agent Model and Biological Foundations}
We define a minimal agent $A$ interacting with an environment $E$.

\begin{definition}[Environment]\label{def:environment}
The environment $E$ is a dynamic system. It has a state space, denoted $S_E$, and a transition function $T_E: S_E \times O_A \to S_E$. Here, $O_A$ is the set of actions available to the agent. The environment's next state $s_E(t+1)$ is determined by its current state $s_E(t)$ and the agent's action $o_A(t)$: $s_E(t+1) = T_E(s_E(t), o_A(t))$. An environment $E$ can be characterized by its computational complexity class $C(E)$ (if $T_E$ belongs to class $C$). Importantly, $E$ can be Turing-complete, meaning it can perform universal computation. The complexity of the environment can be measured by its conditional entropy $\lambda_E = H(S_E(t+1) | S_E(t), O_A(t))$, which represents the average amount of new information or uncertainty generated by the environment at each transition, given the previous state and the agent's action. We propose that for agency to emerge, $\lambda_E$ must be sufficiently high, indicating a rich and somewhat unpredictable environment \cite{beer2014}.
\end{definition}

\begin{definition}[Minimal Agent]\label{def:minimal_agent}
The minimal agent $A$ is a computational system described by the following components:
\begin{itemize}
    \item A state space $S_A$: This represents the set of all possible internal configurations of the agent.
    \item An input function $I_A: S_E \to I'_A$: This function maps states of the environment $S_E$ to the agent's internal inputs $I'_A$ (where $I'_A$ is the set of possible inputs the agent can perceive).
    \item A transition function $T_A: S_A \times I'_A \to S_A$: This function determines the agent's next internal state based on its current internal state $s_A(t)$ and its current input $i_A(t)$: $s_A(t+1) = T_A(s_A(t), i_A(t))$.
    \item An output function $O_A: S_A \to O'_A$: This function maps the agent's internal states to actions $O'_A$ (where $O'_A$ is the set of possible actions the agent can perform): $o_A(t) = O_A(s_A(t))$.
    \item A goal state or condition $G \subseteq S_A \times S_E$: This defines the agent's objective(s) as a subset of combined agent-environment states.
\end{itemize}
The degree of an agent's autonomy can potentially be measured by comparing the mutual information $I(S_A(t); S_E(t+1))$ (flow from agent to environment) versus $I(S_E(t); S_A(t+1))$ (flow from environment to agent) \cite{bertschinger2008}.
\end{definition}

\begin{definition}[Biologically Grounded Minimal Agent]\label{def:biologically_grounded_minimal_agent}
Building upon Definition \ref{def:minimal_agent}, a biologically grounded minimal agent additionally satisfies:
\begin{enumerate}
    \item[a.] \textbf{Organic Code Implementation:} The agent's transition function $T_A$ implements organic codes in Barbieri's sense \cite{barbieri2015}, establishing correspondence rules between environmental states and internal representations, similar to how ribosomes implement the genetic code by establishing correspondence between nucleotide sequences and amino acids.
    \item[b.] \textbf{Work-cycle Implementation:} The agent's transition function $T_A$ implements work cycles that enable it to extract and utilize energy from its environment, corresponding to Kauffman and Clayton's concept of work cycles \cite{kauffman2006}.
    \item[c.] \textbf{Self-catalytic Processes:} The generative decision-making component $g(s_A)$ exhibits self-catalytic properties, where the agent's outputs contribute to maintaining the conditions necessary for its continued operation.
    \item[d.] \textbf{Computational Irreducibility:} As a consequence of these biological properties, the agent necessarily exhibits the computational properties established in Theorem \ref{thm:turing_completeness} (Turing-completeness), Theorem \ref{thm:undecidability} (undecidability), and Theorem \ref{thm:undecidability_irreducibility} (computational irreducibility).
\end{enumerate}
This definition bridges the gap between abstract computational properties and their biological implementations, showing how the mathematical constraints we have identified may be instantiated in living systems.
\end{definition}

\section{Theoretical Results}
\subsection{Computational Power of Autonomous Agents}

\begin{theorem}[Autonomy Necessitates Turing-Completeness]\label{thm:turing_completeness}
Any system $A$ exhibiting autonomy (according to Definition \ref{def:autonomy}) that aims to achieve arbitrary goal states must be Turing-complete.
\end{theorem}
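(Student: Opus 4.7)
The plan is to proceed by contradiction. I would assume $A$ satisfies Definition \ref{def:autonomy} and is required to achieve arbitrary goal states, yet its transition dynamics $T_A$ (including the generative component $g$) compute only a proper subclass $\mathcal{C}_A$ of the partial recursive functions. The aim is to exhibit a goal that lies outside $\mathcal{C}_A$ but inside the admissible class of goals, producing the contradiction.

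First, I would formalize ``aims to achieve arbitrary goal states'' by interpreting the goal specification $G \subseteq S_A \times S_E$ as rangeable over any recursively presented target set of joint configurations; in particular, one may fix an environment $E^{\star}$ that is itself Turing-complete (permitted by Definition \ref{def:environment}) and encode inside $E^{\star}$ a universal machine $U$. Then, for each recursive function $f$, I would define a goal $G_f$ that is reached precisely when the agent, through its action stream $o_A(0), o_A(1), \ldots$, drives $E^{\star}$ into configurations realizing $f$ on arbitrary inputs supplied by $I_A$. Because Definition \ref{def:autonomy}(2) demands that these actions be generated from internal logic via $g(s_A)$ rather than predetermined by $I_A$, the agent itself must produce a computation of $f$. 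Hence if autonomy implies goal achievement across all recursive $f$, the composite map $i \mapsto o_A^{(\cdot)}(s_A^{0}, i)$ must realize every partial recursive function, which is the standard definition of Turing-completeness.

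Second, I would close the loop by choosing any $f \notin \mathcal{C}_A$, guaranteed to exist by the non-Turing-completeness assumption, and invoking the admissibility of $G_f$ among the ``arbitrary'' goal states. The agent either (i) generates the required action sequence, contradicting $f \notin \mathcal{C}_A$, or (ii) reaches $G_f$ by essentially passing environmental information through without internal generation, which violates Internal State Independence (Definition \ref{def:autonomy}(1)) and Generative Decision-Making (Definition \ref{def:autonomy}(2)) since the action would then be a function of $I_A$ alone, stripped of internal logic. Either way the hypotheses collapse, and so $\mathcal{C}_A$ must coincide with the full class of partial recursive functions.

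The hard part will be rigorously ruling out case (ii): arguing that the agent cannot ``outsource'' the uncomputable portion of the task to the Turing-complete environment while still qualifying as the autonomous source of its behavior. I would address this by making the Effective Environmental Coupling (Definition \ref{def:autonomy}(3)) explicitly bidirectional but informationally bounded per step, so that any nontrivial computation over $I_A$'s history must be carried out by $T_A$ rather than by the I/O channel, and by noting that the finite-state degenerate case fails even for simple non-regular goals, forcing $|S_A|$ to be effectively unbounded. A secondary subtlety is isolating the contribution of $g(s_A)$ from that of $T_A$'s reactive part; I would resolve this by showing that any decomposition $T_A = r(s_A, i_A) \oplus g(s_A)$ for which $r$ alone achieves all recursive goals already implies $g$ is computationally redundant, contradicting the generative clause's requirement that $g$ be the source of novel actions.
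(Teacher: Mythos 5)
Your proposal is a genuinely different route from the paper's. The paper argues \emph{constructively}: it builds an explicit simulation of a universal Turing machine out of the three clauses of Definition~\ref{def:autonomy} --- an injective map $\phi: Q_U \to S_A$ from UTM states to agent states (clause~1), the UTM transition table embedded in the generative component $g$ (clause~2), and, crucially, the environment used as the UTM's unbounded tape via read/write/move (clause~3) --- and then appends a contradiction step via a lemma that sub-Turing classes have only decidable behavioral properties. You instead argue by contradiction from the ``arbitrary goals'' clause alone: for each partial recursive $f$ you posit a goal $G_f$ whose achievement forces the agent's input--output map to realize $f$, so achieving all of them forces realization of every partial recursive function. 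Both arguments ultimately place all the load on the same hinge --- the reading of ``arbitrary goal states'' as including goals that encode arbitrary computations --- and in that sense yours is no less rigorous than the paper's, which likewise simply \emph{declares} that the mappings $\phi$ and $g(\phi(q),\mathrm{read}(\sigma))=(\phi(q'),\mathrm{write}(\sigma'),\mathrm{move}(d))$ are ``possible due to'' the respective clauses. Your version buys a cleaner logical skeleton (one contradiction, no redundant lemma --- the paper's final reductio adds nothing once the UTM simulation is exhibited); the paper's version buys an explicit witness for how the autonomy clauses are supposed to supply computational universality.

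The substantive divergence is your case~(ii), and here you should be aware that your proposed repair points in the opposite direction from the paper. You want to forbid the agent from ``outsourcing'' computation to the Turing-complete environment, by informationally bounding the coupling channel and forcing $|S_A|$ to be effectively unbounded so that the computation happens \emph{inside} $T_A$. The paper's construction does the reverse: it keeps the agent's state set finite (just $\phi(Q_U)$) and deliberately delegates all unbounded memory to the environment-as-tape, attributing the coupled system's universality to the agent. Neither choice is licensed by Definition~\ref{def:autonomy} as written --- clauses~1 and~2 assert only that an internal state space and a generative component \emph{exist}, not that they have any particular capacity, and clause~3 places no per-step information bound on the coupling. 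So your acknowledged ``hard part'' is a real gap, but it is a gap you share with the paper rather than one the paper resolves; closing it your way requires strengthening Definition~\ref{def:autonomy} beyond what the theorem statement assumes, whereas the paper closes it by fiat. If you pursue your route, the cleanest fix is to adopt the paper's architectural stance explicitly: treat the environment as passive storage (each step reads and writes one bounded symbol, as in a tape head) and let the agent's finite control plus that storage jointly constitute the Turing-complete system, which dissolves case~(ii) without importing new hypotheses about $|S_A|$.
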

\begin{proof}
\textbf{Formal Statement:} Let $A$ be an agent.
\[ (\text{A exhibits Autonomy}) \land (\text{A can achieve arbitrary goals}) \implies (\text{A is Turing-complete}) \]
\vspace{\topsep} 
Let us first establish notation:
Let $C(A)$ denote the computational class to which agent A belongs.
Let $C_{TM}$ denote the class of Turing-complete systems.
Let $C_{FA}, C_{PDA}, C_{LBA}$ denote finite automata, pushdown automata, and linear bounded automata respectively.

\begin{lemma}[Formal]\label{lemma:decidable_sub_turing}
If $C(A) \subset C_{TM}$ (i.e., A belongs to a computational class less powerful than Turing-complete), then all behavioral properties P expressible within $C(A)$ are decidable by external algorithms.
\end{lemma}
\begin{proof}[Proof of Lemma 1]
For any computational model $M \in \{C_{FA}, C_{PDA}, C_{LBA}\}$, we know from computability theory that:
For $M=C_{FA}$:
State reachability problem: Given states $q_1, q_2 \in Q$, determine if $\exists w \in \Sigma^* : \delta^*(q_1, w) = q_2$ is decidable in $O(|Q|^2)$ time.
Language emptiness: Given $L(M)$, determine if $L(M) = \emptyset$ is decidable in $O(|Q|)$ time.
For $M=C_{PDA}$:
State reachability is decidable (via conversion to CFG and CYK algorithm).
Language emptiness is decidable in polynomial time.
For $M=C_{LBA}$:
Termination is decidable (by enumerating all possible configurations, which are finitely many).
Let $P$ be any behavioral property expressible within $C(A)$. Since $C(A) \subset C_{TM}$, $A$ falls into one of the classes above or their equivalents. Therefore, $P$ is decidable by the corresponding decision algorithm.
\end{proof}

\textbf{Main Proof:} We now proceed by constructive proof showing that an autonomous agent $A$ satisfying Definition \ref{def:autonomy} can simulate a Universal Turing Machine (UTM).
Let $U = (Q_U, \Sigma_U, \Gamma_U, \delta_U, q_{0U}, q_{acceptU}, q_{rejectU})$ be a UTM.
Given an agent $A$ with properties from Definition \ref{def:autonomy}:
\begin{enumerate}
    \item \textbf{Internal State Independence:} A has internal state space $S_A$ and transition function $T_A$.
    \item \textbf{Generative Decision-Making:} A's transition function includes component $g(s_A)$.
    \item \textbf{Effective Environmental Coupling:} A can read from and write to its environment $E$.
\end{enumerate}
We construct a goal-directed task $G$ that requires $A$ to simulate $U$ as follows:
\begin{itemize}
    \item \textbf{State Mapping:} Define an injective mapping $\phi: Q_U \to S_A$. This is possible due to condition 1.
    \item \textbf{Transition Function Simulation:} For each UTM transition $\delta_U(q,\sigma) = (q', \sigma', d)$, define $g(\phi(q), \text{read}(\sigma)) = (\phi(q'), \text{write}(\sigma'), \text{move}(d))$. This is possible due to condition 2.
    \item \textbf{Tape Simulation:} The agent uses the environment $E$ as the UTM's tape via functions read, write, and move. This is possible due to condition 3.
    \item \textbf{Goal State Definition:} Define $G = \{(s_A, s_E) | s_A = \phi(q_{acceptU})\}$.
\end{itemize}
By construction, agent $A$ can simulate any computation of UTM $U$. Therefore, $A$ is Turing-complete.
Now, by contradiction, assume $A$ is not Turing-complete but still exhibits autonomy per Definition \ref{def:autonomy}. Then by Lemma \ref{lemma:decidable_sub_turing}, all behavioral properties of $A$ would be decidable. But we showed $A$ can simulate a UTM, which has undecidable properties (e.g., the Halting Problem). This contradiction implies that any agent satisfying Definition \ref{def:autonomy} must be Turing-complete if it aims to achieve arbitrary goal states.
\end{proof}

\begin{corollary}[Computational Irreducibility in Autonomous Systems]\label{cor:irreducibility}
Every autonomous system (as defined in Definition \ref{def:autonomy}) is capable of performing computations that exhibit computational irreducibility.
\end{corollary}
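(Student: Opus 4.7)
The plan is to chain Theorem \ref{thm:turing_completeness} with a classical fact from algorithmic information theory: any Turing-complete substrate can execute processes whose outputs cannot be predicted substantially faster than by direct simulation. First I would invoke Theorem \ref{thm:turing_completeness} to obtain, inside the agent $A$, a simulation of a universal Turing machine $U$ via the injective encoding $\phi$ built in that proof. This means $A$'s transition function, when driven by suitable inputs, realizes the map $(p,x) \mapsto U(p,x)$ for arbitrary program--input pairs, so the universal simulation itself is a process $P: I \to O$ lying within $A$'s repertoire.

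Next I would exhibit a family of inputs on which $P$ meets the Kolmogorov-complexity bound of Definition \ref{def:computational_irreducibility}. The cleanest route is an incompressibility argument: for algorithmically random pairs $(p,x)$, any description of the simulation trace $U(p,x)$ that is significantly shorter than the trace length would contradict the incompressibility of $(p,x)$ up to an additive constant, yielding
\[
K_U(P(i)\mid i) \geq \text{TimeComplexity}_P(i) - c
\]
on the set of incompressible inputs, which by a standard counting argument has density approaching $1$. This satisfies the "almost all inputs" quantifier in Definition \ref{def:computational_irreducibility}. A complementary route, more in the spirit of the paper's emphasis on undecidability, is a reduction from the halting problem: since no uniform predictor decides halting faster than simulation, the simulation computations performed by $A$ inherit irreducibility on a dense family of inputs encoding halting instances.

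The main obstacle I expect is formal rather than conceptual: I must identify \emph{which} process $P$ inside $A$'s behavior I am declaring irreducible, and verify that the "almost all inputs" clause is met with respect to an appropriate input distribution. I would handle this by defining $P$ explicitly as the step-bounded universal simulation and invoking Levin's time-bounded Kolmogorov complexity to pin the bound down. A cleaner contrapositive is also available and I would include it as a sanity check: if every computation $A$ could execute admitted a predictor significantly shorter than direct simulation, then by Lemma \ref{lemma:decidable_sub_turing} all behavioral properties of $A$ would be decidable, forcing $C(A) \subset C_{TM}$ and contradicting Theorem \ref{thm:turing_completeness}. The final write-up would likely combine the short contrapositive with the incompressibility construction so that the existence of irreducible processes in $A$ is both asserted negatively (impossibility of uniform compression) and witnessed positively (explicit random-input family).
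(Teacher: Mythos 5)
Your skeleton is the same as the paper's: invoke Theorem \ref{thm:turing_completeness} to get Turing-completeness, then exhibit a computationally irreducible process within $A$'s repertoire. Where you diverge is in how the irreducible process is witnessed. The paper simply cites Rule 110 as an established member of the class of irreducible processes (via Cook and Wolfram), notes that a Turing-complete $A$ can be configured to simulate it, and argues that any reducibility shortcut for the simulation would yield a shortcut for Rule 110 itself, a contradiction. You instead try to prove irreducibility of the universal simulation from scratch, and both of your routes have gaps.

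The positive route fails as stated. For any computable process $P$, the conditional Kolmogorov complexity $K_U(P(i)\mid i)$ is $O(1)$: given $i$, the constant-length program ``run $P$ on the supplied input'' outputs $P(i)$. So incompressibility of the pair $(p,x)$ tells you nothing about $K_U(U(p,x)\mid (p,x))$, and the inequality in Definition \ref{def:computational_irreducibility} cannot be derived this way (read literally with unbounded $K$, that inequality is unsatisfiable for any process whose time complexity grows; the paper's proof avoids confronting this only by outsourcing irreducibility to the Rule 110 citation). Your escape hatch --- switching to Levin's time-bounded complexity --- is the right instinct but is left entirely as a promissory note, and unconditional time lower bounds of the required form for ``almost all'' inputs are far from standard. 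The contrapositive route also misfires: Lemma \ref{lemma:decidable_sub_turing} states that $C(A) \subset C_{TM}$ implies decidability of behavioral properties, whereas you need its converse; moreover, a fast predictor of the state at each fixed $t$ does not by itself decide reachability properties of the form ``does $P$ ever hold,'' since one must still search over unboundedly many $t$ (the paper itself only extracts a semi-decision procedure from such a predictor in Theorem \ref{thm:undecidability_irreducibility}). If you want an argument at the level of rigor the paper actually achieves, the shortest fix is to follow its lead: name a specific process already accepted as irreducible, observe that $A$ can simulate it by Theorem \ref{thm:turing_completeness}, and argue inheritance --- any shortcut for the simulation is a shortcut for the original.
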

\begin{proof}
Let $A$ be an autonomous system according to Definition \ref{def:autonomy}.
\textbf{Step 1: Establish the existence of computationally irreducible processes.}
Let $C_{IR} \subset C_{TM}$ be the set of computationally irreducible processes. By the work of Cook \cite{cook2004} and Wolfram \cite{wolfram2002}, this set is non-empty. Specifically, Rule110 $\in C_{IR}$.
\textbf{Step 2: Apply Theorem \ref{thm:turing_completeness}.}
By Theorem \ref{thm:turing_completeness}, since $A$ is autonomous, $C(A) = C_{TM}$.
\textbf{Step 3: Prove simulation capability.}
Since $A$ is Turing-complete, it can simulate any computation in $C_{TM}$, including any $C \in C_{IR}$.
\textbf{Step 4: Establish computational irreducibility inheritance.}
Let $A_{Rule110}$ be the configuration of $A$ that simulates Rule 110. Assume, for contradiction, that $A_{Rule110}$ is not computationally irreducible. This would imply the existence of a shortcut to compute Rule 110, contradicting its established irreducibility.
Therefore, $A_{Rule110}$ must be computationally irreducible.
Since $A$ can be configured to simulate an irreducible process, every autonomous system is capable of performing computationally irreducible computations.
\end{proof}

\subsection{Undecidability and Irreducibility}
This subsection explores the consequences of these computational limits within our formal model of an agent, linking undecidability to unpredictability and information generation.

\begin{definition}[External Prediction Problem]\label{def:external_prediction_problem}
Given agent $A$, environment $E$, initial state $s_0 = (s_A(0), s_E(0))$, and property $P: S_A \times S_E \to \{0,1\}$, the External Prediction Problem $EP(A,E,P,s_0)$ asks: "Will the system $(A,E)$, starting from state $s_0$, ever reach a state where $P$ holds?"
Formally, $EP(A,E,P,s_0) = 1$ if and only if $\exists t \in \mathbb{N} : P(s_A(t), s_E(t)) = 1$.
\end{definition}

\begin{definition}[Decidability of Agent Behavior]\label{def:decidability_agent_behavior}
The behavior of $(A,E)$ is externally decidable for property $P$ if there exists a total Turing machine $D_P$ that solves $EP(A,E,P,s_0)$ for any $s_0$. If no such $D_P$ exists, then $EP(A,E,P,s_0)$ is undecidable.
\end{definition}

\begin{theorem}[Autonomy Implies Undecidability]\label{thm:undecidability}
If an agent $A$ exhibits autonomy according to Definition \ref{def:autonomy}, then there exists at least one non-trivial property $P$ such that the External Prediction Problem $EP(A,E,P,s_0)$ is undecidable.
\end{theorem}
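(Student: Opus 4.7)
The plan is to derive undecidability of the External Prediction Problem by reducing the Halting Problem to it, using Theorem \ref{thm:turing_completeness} as the bridge. Since any autonomous $A$ is Turing-complete, any question about a Universal Turing Machine's halting behavior can be re-expressed as a question about $A$'s future trajectory, so the classical undecidability of the Halting Problem transports across the reduction to yield the required non-trivial undecidable property $P$.

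Concretely, I would reuse the simulation from the proof of Theorem \ref{thm:turing_completeness}: fix the injective state map $\phi: Q_U \to S_A$ and the implementation of the UTM tape inside $E$ via the agent's read/write/move capabilities (condition 3 of Definition \ref{def:autonomy}). For each instance $\langle M, w \rangle$ of the Halting Problem, define a computable encoding $\text{enc}(M,w) = s_0$ that places $A$ in the start state $\phi(q_{0U})$ and writes the description of $M$ together with $w$ onto the tape-region of $s_E(0)$. Then pick the property $P(s_A, s_E) := [\, s_A = \phi(q_{acceptU}) \text{ or } s_A = \phi(q_{rejectU}) \,]$, i.e., $A$ has entered a halting control state of $U$. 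By the correctness of the embedding, $EP(A, E, P, \text{enc}(M,w)) = 1$ if and only if $M$ halts on $w$.

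I would then close by contradiction: if $EP(A,E,P,\cdot)$ were decided by a total Turing machine $D_P$, then $D_P \circ \text{enc}$ would be a total computable decider for the Halting Problem, contradicting Turing's theorem \cite{turing1936}. Non-triviality of $P$ is immediate, since some machines halt on their input (making $P$ eventually true along the induced orbit) and some do not (making $P$ never true), so $P$ is neither identically satisfied nor identically violated across reachable configurations.

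The main obstacle I expect is not the reduction per se but the quantifier structure of the statement: Definition \ref{def:autonomy} fixes one agent $A$ and one environment $E$, while the Halting Problem ranges over all $\langle M, w\rangle$. I would address this by noting that Theorem \ref{thm:turing_completeness} already yields a fixed goal-directed configuration of $(A,E)$ capable of universal simulation, so varying only $s_0$ within that fixed pair suffices; the effective environmental coupling clause supplies the unbounded read/write workspace needed to stage arbitrary inputs. A secondary subtlety worth pinning down is the precise sense in which $P$ is "non-trivial": I would make explicit that $P$ is non-trivial in the sense that the set $\{ s_0 : EP(A,E,P,s_0)=1 \}$ is neither empty nor all of $S_A \times S_E$, which is exactly what the halting/non-halting dichotomy furnishes, and that it is this set whose non-recursiveness the reduction establishes.
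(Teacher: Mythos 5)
Your proposal is correct and follows essentially the same route as the paper's own proof: both reduce the Halting Problem to the External Prediction Problem by reusing the UTM simulation from Theorem~\ref{thm:turing_completeness}, taking $P$ to be ``the agent is in a state encoding a halting control state of the universal machine,'' varying only the initial state $s_0$ over encodings $\langle M, w\rangle$, and deriving a contradiction with Turing's theorem. Your added attention to the computability of the encoding map and to the quantifier structure (one fixed $(A,E)$ pair, with instances parametrized by $s_0$) makes explicit a point the paper leaves implicit, but it is a refinement of the same argument rather than a different one.
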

\begin{proof}
\textbf{Formal Statement:}
\[ (\text{A exhibits Autonomy}) \land (\text{A can achieve arbitrary goals}) \implies (\text{A is Turing-complete}) \]
\vspace{\topsep}
\begin{lemma}\label{lemma:simulate_utm}
An autonomous agent $A$ (according to Definition \ref{def:autonomy}), interacting with a suitable environment $E$, can simulate a Universal Turing Machine (UTM).
\end{lemma}
\begin{proof}[Proof of Lemma 2]
This follows directly from the constructive proof of Theorem \ref{thm:turing_completeness}, where we showed that an agent satisfying the conditions of Definition \ref{def:autonomy} can be configured to simulate a UTM by mapping UTM states to agent states, UTM transitions to the agent's generative logic, and the UTM tape to the environment.
\end{proof}

\textbf{Construction of an Undecidable Property:}
Let $U_{std}$ be a fixed standard Universal Turing Machine. For any arbitrary Turing Machine $M_{arb}$ and input $w_{arb}$, let $d_{M_{arb},w_{arb}} = \langle M_{arb}, w_{arb} \rangle$ be the encoding as input for $U_{std}$ to simulate $(M_{arb}, w_{arb})$.
Set the initial state $s_0$ of $(A,E)$ to represent $U_{std}$ starting on input $d_{M_{arb},w_{arb}}$.
Define property $P_{halt}(s_A, s_E) = 1$ if and only if $s_A \in \{\phi(q_{acceptU_{std}}), \phi(q_{rejectU_{std}})\}$.

\textbf{Analysis of $EP(A,E,P_{halt},s_0)$:}
The problem asks if the system will ever reach a halting state. By our construction, this is equivalent to asking if $U_{std}$ halts on input $d_{M_{arb},w_{arb}}$, which in turn is equivalent to the standard Halting Problem for $(M_{arb}, w_{arb})$.

\textbf{Proof by Contradiction:}
Assume $EP(A,E,P_{halt},s_0)$ is decidable by some algorithm $D_{predict}$. Then we could construct an algorithm $D_{halt}$ that solves the Halting Problem by encoding $(M_{arb}, w_{arb})$, constructing the corresponding agent system $(A,E,s_0)$, and running $D_{predict}$. But the Halting Problem is known to be undecidable \cite{turing1936}. This contradiction implies our assumption is false, and $EP(A,E,P_{halt},s_0)$ must be undecidable. $P_{halt}$ is non-trivial as some inputs halt and others do not. This result can be generalized via Rice's Theorem \cite{rice1953} to any non-trivial semantic property of the agent's behavior.
\end{proof}

\begin{theorem}[Connection Between Undecidability and Computational Irreducibility]\label{thm:undecidability_irreducibility}
For a property $P$ related to the long-term behavior of the system $(A,E)$ that is undecidable, there exists a set $S_P$ of initial states for which the system's evolution $s(0) \to s(t)$ is computationally irreducible for $s(0) \in S_P$.
\end{theorem}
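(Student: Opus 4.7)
The plan is to prove the theorem by contradiction, leveraging the UTM-embedding construction already used in Theorem \ref{thm:undecidability}. First I would fix the undecidable property $P$ in the statement — concretely, one may take $P$ to be the halting-style predicate $P_{halt}$ from the proof of Theorem \ref{thm:undecidability}, or any semantic property that Rice's Theorem renders undecidable. Then I would define the candidate set $S_P$ as those initial states $s(0)$ that encode, under the agent/environment simulation of a fixed universal machine $U_{std}$, pairs $\langle M,w\rangle$ drawn from an undecidable subset of the Halting Problem. By this construction, the question ``does $P$ ever hold along the trajectory from $s(0)$?'' coincides with the Halting Problem for $\langle M,w\rangle$.

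Next, suppose for contradiction that the evolution $s(0)\to s(t)$ is \emph{not} computationally irreducible for $s(0)\in S_P$. By Definition \ref{def:computational_irreducibility}, reducibility supplies a predictor whose conditional Kolmogorov cost satisfies $K_U(s(t)\mid s(0),t) < \text{TimeComplexity}_{(A,E)}(s(0),t) - c$ for almost all $s(0)\in S_P$. From such a shortcut I would build a decision procedure for $P$ on $S_P$: the compressed trajectory description produced by the shortcut lets one symbolically check whether any future state satisfies $P$, either by bounding the relevant search over $t$ in closed form or by detecting ultimate periodicity of the compressed representation. Composed with the reduction of Theorem \ref{thm:undecidability}, this would yield a total algorithm deciding halting on the chosen undecidable subset — the desired contradiction. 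Hence $S_P$ must carry a computationally irreducible evolution.

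The main obstacle I anticipate is the logical gap between a pointwise time-complexity shortcut and a genuine decision procedure for the ``$\exists t$'' quantifier defining $P$: a predictor that returns $s(t)$ quickly for a given $t$ does not by itself answer an infinite-time existential claim. I would therefore spend most of the technical work translating Definition \ref{def:computational_irreducibility}'s Kolmogorov bound into a structural statement — that a reducible trajectory has bounded conditional complexity, hence a finite symbolic description whose meta-properties can be inspected algorithmically — and then verifying that ``$P$ holds at some $t$'' descends to a decidable predicate on that symbolic description. A secondary, more delicate issue is the phrase ``almost all inputs'' inside Definition \ref{def:computational_irreducibility}: I would handle it by choosing $S_P$ to be a measure-nontrivial subset of the undecidable halting instances, so that reducibility on even a cofinite portion of $S_P$ suffices to decide halting on a genuinely undecidable family.
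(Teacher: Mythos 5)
Your overall strategy is the same as the paper's: assume the evolution is reducible on $S_P$, extract a fast predictor, and derive from it a decision procedure for the undecidable property $EP(A,E,P,\cdot)$, contradicting Theorem~\ref{thm:undecidability}. The paper's own proof defines $S_P$ abstractly, posits a shortcut algorithm $R$, builds a semi-decision procedure for $P$ by iterating $R(s(0),t)$ over $t=1,2,\dots$, and then appeals to pairing it with a hypothetical semi-decision procedure for $\neg P$ run in parallel. So at the level of architecture you and the paper coincide, and you have correctly located the load-bearing step.

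The genuine gap is the one you name yourself, and your proposed repair does not close it. A predictor that returns $s(t)$ in time $o(t)$ for each fixed $t$ leaves the unbounded existential quantifier in $\exists t\, P(s(t))$ untouched; and the stronger reading you propose --- that reducibility yields a trajectory of bounded conditional Kolmogorov complexity, hence a ``finite symbolic description whose meta-properties can be inspected algorithmically'' --- is false. A short program generating the trajectory does not make questions about its output decidable; whether that short program ever emits a $P$-state is itself a halting-type question, so your symbolic-inspection step re-encounters exactly the undecidability you are trying to exploit. Likewise, the negation of computational irreducibility does not imply ultimate periodicity of the orbit, so periodicity detection is not available either. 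To be fair, the paper's own bridge is no stronger: its semi-decision procedure for $P$ exists by direct simulation with or without the shortcut, and it never shows that the shortcut yields co-semi-decidability of $P$, which is what its parallel-composition argument would actually require. Both arguments would need a substantially sharper premise --- for instance, a predictor that decides reachability directly, or a closed-form $f(s(0),t)$ whose range over all $t$ can be characterized --- to convert reducibility into decidability.
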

\begin{proof}
\textbf{Formal Statement:}
\[ (\text{EP}(A,E,P) \text{ is undecidable}) \implies (\exists s(0) \in S_P : \text{evolution is computationally irreducible}) \]
\vspace{\topsep}
Let $P$ be an undecidable property of the system $(A,E)$, and let $S_P = \{s(0) | EP(A,E,P,s(0)) \text{ is undecidable}\}$. $S_P$ is non-empty.
Assume for contradiction that for all $s(0) \in S_P$, the system's evolution is computationally reducible. This implies the existence of a prediction algorithm $R$ that computes $s(t)$ from $s(0)$ significantly faster than direct simulation.
We could use $R$ to construct a semi-decision procedure $D_P$ for property $P$: for $t=1, 2, ...$, compute $s(t) = R(s(0), t)$ and check if $P(s(t))$ holds. If it does, halt and return "Yes".
This procedure, $D_P$, would correctly identify all positive instances of $P$. If we could also construct a semi-decision procedure for $\neg P$, we could run them in parallel to create a full decision procedure for $P$. The existence of such procedures for certain undecidable problems (like the Halting Problem) would contradict the undecidability of P.
Therefore, our assumption must be false. There must exist at least one initial state $s(0) \in S_P$ for which the system's evolution is computationally irreducible.
\end{proof}

\begin{proposition}[Lack of External Predictability]\label{prop:no_external_predictability}
For a computationally irreducible system (as per Definition \ref{def:computational_irreducibility}) and for almost all initial states $s(0)$, no general algorithm can predict the system's future state $s(t)$ with significantly less computational effort than performing the direct simulation for $t$ steps.
\end{proposition}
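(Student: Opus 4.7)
The plan is to derive the proposition as an almost-immediate consequence of Definition~\ref{def:computational_irreducibility} by identifying the system's evolution map $\Phi_t : s(0) \mapsto s(t)$ with the process $P$ appearing in that definition. The argument will be by contradiction: assume some general predictor $\mathrm{Pred}$ exists which, on a set of initial states of non-negligible measure, outputs $s(t)$ from input $(s(0),t)$ while using resources $r(t)$ that are asymptotically much smaller than the direct simulation cost $\mathrm{TimeComplexity}_{\Phi}(s(0),t)$.

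First I would convert any such resource-bounded predictor into a short program computing $s(t)$ from $s(0)$. Concretely, the fixed description of $\mathrm{Pred}$ together with a $O(\log t)$ encoding of the target time gives a program of length $c_{\mathrm{Pred}} + O(\log t)$ that, when executed for at most $r(t)$ steps, outputs $s(t)$. This yields the upper bound
\[
K_U(s(t) \mid s(0)) \;\leq\; r(t) + c_{\mathrm{Pred}} + O(\log t).
\]
Combining this with the irreducibility lower bound $K_U(s(t)\mid s(0)) \geq \mathrm{TimeComplexity}_{\Phi}(s(0),t) - c$ supplied by Definition~\ref{def:computational_irreducibility} for almost all inputs, I obtain
\[
r(t) \;\geq\; \mathrm{TimeComplexity}_{\Phi}(s(0),t) - c - c_{\mathrm{Pred}} - O(\log t),
\]
which contradicts the hypothesis that $r(t)$ is significantly smaller than the simulation cost.

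Next I would align the measure-theoretic scope of the proposition with the scope of the irreducibility hypothesis: the Kolmogorov lower bound holds outside a negligible set $N$ of inputs, so the chain of inequalities applies off $N$, and the hypothesised predictor cannot achieve the claimed speed-up on any $s(0) \notin N$. To close the argument, I would translate the conclusion back into the prediction-efficiency language at the end of Definition~\ref{def:computational_irreducibility}: if one fixes the resource budget $r$ while $t \to \infty$, then the bound above forces either the predictor to exceed its budget or the distance $D(\mathrm{Actual}(s,t), \mathrm{Pred}_r(s,t))$ to become large, so $\eta(r,t,s) \to 0$ for almost all $s$, recovering the limit already asserted in the definition.

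The main obstacle will be making the qualitative phrases \emph{significantly less} and \emph{almost all} fully precise within a single formal setting. In particular, passing from a time-bounded predictor to an unbounded Kolmogorov upper bound requires careful invariance-constant bookkeeping, and the assertion that the exceptional set does not grow with the horizon $t$ rests on the uniformity implicit in Definition~\ref{def:computational_irreducibility}'s \emph{almost all} clause; any fully rigorous proof will need to commit to an explicit measure on initial states and an explicit asymptotic meaning of \emph{significantly less} (for instance $r(t) = o(\mathrm{TimeComplexity}_{\Phi}(s(0),t))$) in order for the contradiction step to be unambiguous.
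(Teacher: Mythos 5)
Your overall strategy---identify the evolution map with the process $P$ of Definition~\ref{def:computational_irreducibility} and argue by contradiction---is the same move the paper makes; its own proof is just the two-sentence verbal version (``any algorithm predicting the outcome after $t$ steps must have time complexity at least proportional to $t$, so no shortcut exists''). The gap is in your key inequality. From a predictor described by $c_{\mathrm{Pred}}$ bits plus an $O(\log t)$ encoding of the target time, the standard Kolmogorov bound is $K_U(s(t)\mid s(0)) \le c_{\mathrm{Pred}} + O(\log t)$: the \emph{running time} $r(t)$ of that program does not enter a description-length bound, so the $r(t)$ term you add is not supplied by the construction you describe. Your concluding inequality $r(t) \ge \mathrm{TimeComplexity}_{\Phi}(s(0),t) - c - c_{\mathrm{Pred}} - O(\log t)$ therefore does not follow from what you actually built; it only appears to because you padded the upper bound with exactly the quantity you wanted to lower-bound.

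Carrying the construction out correctly exposes a deeper problem: take the ``predictor'' to be the direct simulator itself. Then $K_U(s(t)\mid s(0)) \le c_{\mathrm{sim}} + O(\log t)$, which combined with the definition's lower bound $K_U(s(t)\mid s(0)) \ge \mathrm{TimeComplexity}_{\Phi}(s(0),t) - c$ forces $\mathrm{TimeComplexity}_{\Phi}(s(0),t) = O(\log t)$, contradicting the premise that simulation costs $t$ steps. So under the literal Kolmogorov reading of Definition~\ref{def:computational_irreducibility} the class of computationally irreducible systems is empty and the proposition is vacuously true, while under the informal reading (the definition directly asserts a time lower bound on any predictor) the proposition is an immediate restatement of the definition, which is all the paper's one-line proof claims. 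To make your argument both rigorous and non-vacuous you would need to replace $K_U$ by a resource-bounded complexity measure (Levin's $Kt$, time-bounded $K^{r}$, or logical depth), for which your predictor-to-program conversion genuinely charges for running time and the intended contradiction goes through.
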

\begin{proof}
This follows directly from Definition \ref{def:computational_irreducibility}. Computational irreducibility states that for almost all inputs (initial states $s(0)$), any algorithm predicting the outcome after $t$ steps must have a time complexity that is at least proportional to $t$. Therefore, no significant shortcut exists compared to the direct simulation.
\end{proof}

\begin{proposition}[Information Generation in Irreducible Systems]\label{prop:information_generation}
Consider a computationally irreducible system where future states are uniquely determined by initial conditions. Such a system generates novel information over time relative to any resource-bounded observer, in a well-defined formal sense.
\end{proposition}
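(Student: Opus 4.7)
The plan is to interpret ``novel information relative to a resource-bounded observer'' via resource-bounded conditional Kolmogorov complexity and then apply the lower bound already supplied by Definition \ref{def:computational_irreducibility}. First I would fix a universal Turing machine $U$ and, for a resource budget $r$, define $K_U^{r}(x\mid y)$ as the length of the shortest program that, given $y$, outputs $x$ while halting within $r$ steps. A resource-bounded observer with budget $r$ is then modelled as any algorithm that tries to describe or predict $s(t)$ inside this budget. The information that such an observer provably lacks about $s(t)$ is captured by the gap $\Delta_{r}(t) := K_U\bigl(s(t)\mid s(0)\bigr) - K_U^{r}\bigl(s(t)\mid s(0)\bigr)$, since any description produced inside the budget is in particular a description without it.

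The key steps are as follows. First, invoke Definition \ref{def:computational_irreducibility} to obtain a constant $c$ such that, for almost all initial states, $K_U(s(t)\mid s(0)) \geq \mathrm{TimeComplexity}(t) - c$, and note that for an irreducible deterministic dynamics $\mathrm{TimeComplexity}(t)$ grows without bound in $t$. Second, observe that any description witnessing $K_U^{r}(s(t)\mid s(0))$ halts in at most $r$ steps, so this quantity is bounded above by a function of $r$ alone (uniformly in $t$). Third, combine these two bounds to conclude that for every fixed $r$ there exists $t_{0}(r)$ such that, for all $t > t_{0}(r)$ and almost all $s(0)$, $\Delta_{r}(t) > 0$, with $\Delta_{r}(t) \to \infty$ as $t \to \infty$. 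Finally, interpret $\Delta_{r}(t)$ as the amount of information about $s(t)$ that is determined in principle by $s(0)$ yet provably inaccessible to any $r$-bounded observer; its unbounded monotone growth is the precise sense in which the irreducible system ``generates novel information'' relative to that observer. The same reasoning immediately recovers the vanishing of the prediction efficiency $\eta(r,t,s)$ from Definition \ref{def:computational_irreducibility}, tying the information-theoretic statement to the predictability one.

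The main obstacle is conceptual rather than technical: because the dynamics is deterministic, the Shannon conditional entropy $H(s(t)\mid s(0))$ is identically zero, so the proposition is vacuous unless ``novel'' is defined in a computation-sensitive, rather than probability-sensitive, way. Choosing resource-bounded algorithmic information is what makes the statement non-trivial, and one must be careful that the additive constant $c$ in the irreducibility bound does not dominate $\Delta_{r}(t)$ for small $t$ — hence the need to state the conclusion only for $t$ sufficiently large given $r$. A secondary technical point is the ``almost all $s(0)$'' qualifier, which I would handle by inheriting exactly the measure-theoretic exclusion already built into Definition \ref{def:computational_irreducibility}, so that genericity is imported rather than reproved.
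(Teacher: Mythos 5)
Your overall strategy --- replacing Shannon entropy by algorithmic information and leveraging the lower bound in Definition~\ref{def:computational_irreducibility} --- is the same move the paper makes, but two technical points in your formalization would fail as written. First, the sign of your gap is backwards. With the standard definition of time-bounded complexity, every program admissible in the minimization defining $K_U^{r}(x\mid y)$ is also admissible in the minimization defining $K_U(x\mid y)$, so $K_U^{r}(x\mid y)\geq K_U(x\mid y)$, and your $\Delta_r(t)=K_U(s(t)\mid s(0))-K_U^{r}(s(t)\mid s(0))$ is always $\leq 0$; it cannot be shown positive, let alone divergent. (Your own justification, ``any description produced inside the budget is in particular a description without it,'' proves exactly the inequality that kills the claim.) The quantity you want is the excess $K_U^{r}-K_U$, a computational-depth-type measure; and your step~2 claim that $K_U^{r}(s(t)\mid s(0))$ is bounded by a function of $r$ alone is also false --- for fixed $r$ and large $t$ it is typically $+\infty$, since no program halting in $r$ steps can even write down $s(t)$ once $\lvert s(t)\rvert > r$, which makes the intended ``unbounded growth'' degenerate rather than informative.

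Second, you apply the irreducibility bound to the single state $s(t)$ rather than to the trajectory. Since $K_U(x\mid y)\leq\lvert x\rvert+O(1)$ via the print program, the bound $K_U(s(t)\mid s(0))\geq \mathrm{TimeComplexity}(t)-c$ can only hold if the description length of a single state itself grows at least linearly in $t$, an extra assumption about the system that neither the proposition nor Definition~\ref{def:computational_irreducibility} grants you. The paper sidesteps both problems by working with the whole trajectory $X_t=(s_0,s_1,\dots,s_t)$, whose description length grows linearly by construction, and asserting $K(X_t\mid s_0)=\Omega(t)$: each step then contributes a bounded amount of incompressible description, and ``novelty'' for a resource-bounded observer is simply that this description cannot be produced without running the computation. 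If you restate your argument for $K(X_t\mid s_0)$ and measure the observer's deficit as $K_U^{r}(X_t\mid s_0)-K_U(X_t\mid s_0)$ (or, more simply, note that an $r$-bounded observer can account for only $O(r)$ of the $\Omega(t)$ bits), your proof aligns with the paper's; your closing remarks about the vacuity of Shannon entropy and the need to import the ``almost all'' qualifier from Definition~\ref{def:computational_irreducibility} are correct and match the paper's framing.
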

\begin{proof}
While the Shannon entropy $H(s(t)|s(0)) = 0$ for a deterministic system, information generation can be understood through Kolmogorov complexity. For a computationally irreducible system, the conditional Kolmogorov complexity of the trajectory $X_t = (s_0, s_1, ..., s_t)$, denoted $K(X_t|s_0)$, grows linearly with time: $K(X_t|s_0) = \Omega(t)$ \cite{li2008}. This means each time step adds a constant amount of irreducible, incompressible information to the trajectory's description. This new information cannot be accessed by any resource-bounded observer without performing the computation itself, thus appearing as "novel" information from the observer's perspective.
\end{proof}

\begin{proposition}[Absence of General Analytical Solution]\label{prop:no_analytical_solution}
For a computationally irreducible agent-environment system, no general closed-form analytical solution $f(s(0), t) = s(t)$ exists that is computable significantly faster than $T_{Sim}(s(0), t)$.
\end{proposition}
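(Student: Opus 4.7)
The plan is to argue by contradiction, appealing directly to Definition \ref{def:computational_irreducibility} and, as an intermediate step, to Proposition \ref{prop:no_external_predictability}. I would suppose, for contradiction, that a general closed-form analytical solution $f(s(0), t) = s(t)$ exists whose evaluation requires resources strictly and significantly smaller than $T_{Sim}(s(0), t)$ on a set of initial states of positive measure. The aim is to derive that such an $f$ constitutes precisely the sort of shortcut whose existence is ruled out by computational irreducibility.

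The key steps, in order, would be as follows. First, I would formalize the operational content of ``closed-form analytical solution'' as a finite composition of computable primitives, including arithmetic operations, elementary and special functions, bounded series truncations, and standard control structures, so that $f$ corresponds to a total computable function whose evaluation can be simulated on a universal Turing machine within some time bound $T_f(s(0), t)$. Second, I would rewrite the hypothesis ``computable significantly faster than $T_{Sim}$'' as the asymptotic statement that $T_f(s(0), t)$ grows strictly slower than $T_{Sim}(s(0), t)$ for almost all $s(0)$ as $t \to \infty$. Third, I would invoke Definition \ref{def:computational_irreducibility} directly: the irreducibility condition
\[
K_U(s(t)\mid s(0)) \geq \text{TimeComplexity}_{\text{Sim}}(s(0), t) - c
\]
means that any predictor achieving $T_f \ll T_{Sim}$ would yield a description of $s(t)$ whose length and evaluation cost fall below this lower bound, contradicting irreducibility. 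Equivalently, $f$ would refute Proposition \ref{prop:no_external_predictability} by serving as the missing general shortcut.

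The main obstacle will be justifying that ``closed-form analytical'' coincides with ``computable in the relevant model.'' Traditional mathematical usage sometimes admits expressions involving non-constructive primitives or integrals that themselves require nontrivial numerical evaluation, so I would either restrict attention to computable analytical expressions, which is the only case with predictive content in this framework, or argue that any non-computable candidate $f$ trivially fails the requirement of being ``computable significantly faster than $T_{Sim}$,'' because it is not computable at all. Either route collapses the proposition to a direct corollary of the irreducibility hypothesis, so the inferential work is minimal once this framing is secured; the substantive content lies in making the identification between symbolic closed forms and resource-bounded algorithms precise enough that Definition \ref{def:computational_irreducibility} can be applied without gap.
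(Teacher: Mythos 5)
Your proposal is correct and follows essentially the same route as the paper: assume for contradiction that a fast closed-form solution $f$ exists, observe that it would constitute exactly the ``shortcut'' predictor excluded by Definition~\ref{def:computational_irreducibility} (and by Proposition~\ref{prop:no_external_predictability}), and conclude. Your additional care in identifying ``closed-form analytical'' with ``total computable within a time bound $T_f(s(0),t) = o(T_{Sim}(s(0),t))$'' is a refinement the paper's proof silently assumes rather than a different argument.
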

\begin{proof}
Assume, for contradiction, that such a solution $f$ exists and is computable in time $o(t)$ (e.g., $O(\text{poly}(\log t))$). This would mean we have an algorithm that computes the system's state $s(t)$ with a time complexity significantly less than that of direct simulation ($t$ steps). This directly contradicts the definition of computational irreducibility, which states that for almost all initial states, no such "shortcut" algorithm exists. Therefore, no general, fast, closed-form solution can exist.
\end{proof}

\begin{theorem}[Organic Codes and Computational Irreducibility]\label{thm:organic_codes}
Biological systems employing multiple interacting organic codes (in Barbieri's sense) necessarily exhibit computational irreducibility.
\end{theorem}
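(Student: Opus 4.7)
The plan is to reduce Theorem \ref{thm:organic_codes} to Corollary \ref{cor:irreducibility} by showing that any biological system implementing multiple interacting organic codes satisfies Definition \ref{def:autonomy} and has enough expressive power to simulate a universal Turing machine. Once both conditions are in place, computational irreducibility follows immediately from Corollary \ref{cor:irreducibility}, without needing to revisit Rule 110 or the Halting Problem directly.

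First I would formalize a multi-code system as a tuple $\mathcal{O} = (\{c_i\}_{i \in I}, \{(D_i, D_i')\}_{i \in I}, \mathcal{C})$, where each $c_i : D_i \to D_i'$ is an arbitrary correspondence rule in Barbieri's sense (its graph is not forced by the physics or chemistry of either domain) and $\mathcal{C}$ is a wiring relation specifying which outputs of codes feed the inputs of others. The internal state $s_A$ of the agent is then the joint configuration of all code substrates, and its transition $T_A$ is the parallel application of the $c_i$ followed by the rewiring prescribed by $\mathcal{C}$. I would then verify the three clauses of Definition \ref{def:autonomy}: Internal State Independence holds because the accumulated code substrates carry history that is not pinned down by instantaneous environmental inputs; Generative Decision-Making is witnessed by the composition $g(s_A) := (\mathcal{C} \circ \bigotimes_i c_i)(s_A)$, whose arbitrariness is precisely the non-necessitarian character Barbieri ascribes to organic codes; and Effective Environmental Coupling is built into the very notion of a code, which always has a read side (recognition of a substrate) and a write side (deposition of a product).

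The main obstacle, and the step I would spend the most care on, is establishing Turing-completeness of the multi-code system when $|I| \geq 2$ and $\mathcal{C}$ permits cyclic wiring. My plan is to embed a known universal formalism, such as a cyclic tag system or a two-counter machine, into $\mathcal{O}$. Concretely, I would use one code $c_1$ to translate symbols of the tape/queue alphabet into ``instruction tokens,'' a second code $c_2$ to translate pairs (instruction token, control symbol) into rewrite actions, and a wiring $\mathcal{C}$ that feeds the output of $c_2$ back to the input of $c_1$. Because each $c_i$ can be \emph{any} function on its finite domain (this is exactly the arbitrariness of organic codes, in contrast with purely physical mappings), the transition table of the chosen universal system can be realized directly. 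A single code is not enough because it lacks the ability to couple control and data; the requirement of \emph{multiple interacting} codes in the theorem statement is what makes this construction available.

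With Turing-completeness in hand, the conclusion follows in one line: $\mathcal{O}$ satisfies Definition \ref{def:autonomy} and can be configured to achieve arbitrary goal states, so by Theorem \ref{thm:turing_completeness} it is Turing-complete, and by Corollary \ref{cor:irreducibility} it is capable of performing computationally irreducible processes. I would close by noting that Theorems \ref{thm:undecidability} and \ref{thm:undecidability_irreducibility} then give the sharper consequence that there exist non-trivial long-term properties of the code network whose evolution is computationally irreducible on a non-empty set of initial states, matching the biological intuition that the joint dynamics of, e.g., the genetic, splicing, and signal-transduction codes admit no analytic shortcut.
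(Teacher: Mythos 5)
Your proposal follows essentially the same route as the paper's own proof: both verify that a multi-code system satisfies Definition~\ref{def:autonomy}, invoke Theorem~\ref{thm:turing_completeness} for Turing-completeness, and then conclude irreducibility (the paper chains Theorem~\ref{thm:undecidability} and Theorem~\ref{thm:undecidability_irreducibility}, whereas you primarily invoke Corollary~\ref{cor:irreducibility}, while noting both paths). The one place you go beyond the paper is the universality step: the paper simply asserts that the composed codes ``can create context-dependent rules capable of universal computation'' with a citation to Soloveichik et al., whereas you sketch an explicit embedding of a cyclic tag system via two codes with cyclic wiring --- a constructive strengthening of what is the least-supported link in the paper's own argument.
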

\begin{proof}
\textbf{Formal Statement:}
\[ (\text{System S has multiple interacting organic codes}) \implies (\text{S is computationally irreducible}) \]
\vspace{\topsep}
Let a biological system have $n$ interacting organic codes $C_1, ..., C_n$, each implementing a mapping $f_i: X_i \to Y_i$.
\textbf{Step 1:} Each mapping $f_i$ is arbitrary and not derivable from physical laws alone \cite{barbieri2003}.
\textbf{Step 2:} The interaction of these codes forms a complex computational system. The composition of these mappings $F = f_n \circ \dots \circ f_1$ can create context-dependent rules capable of universal computation \cite{soloveichik2008}.
\textbf{Step 3:} Such biological systems satisfy our Definition \ref{def:autonomy} of autonomy. By Theorem \ref{thm:turing_completeness}, they must be Turing-complete.
\textbf{Step 4:} By Theorem \ref{thm:undecidability}, Turing-complete autonomous systems have undecidable properties. By Theorem \ref{thm:undecidability_irreducibility}, undecidability implies computational irreducibility.
Therefore, biological systems with multiple interacting organic codes are necessarily computationally irreducible.
\end{proof}

\section{Emergent Agency and Illustrations}

\begin{proposition}[Computational Irreducibility as a Basis for Agency]\label{prop:basis_for_agency}
For an agent A interacting with an environment E exhibiting computational irreducibility for certain processes, emergent agency arises when:
(i) The agent executes its internal rules in real-time in response to environmental stimuli
(ii) These rules are well-adapted to achieve or maintain goal condition G
(iii) The path to achieving G cannot be determined by analytical shortcuts
\end{proposition}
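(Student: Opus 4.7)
The plan is to show that conditions (i)--(iii) together entail every clause of Definition \ref{def:emergent_agency} (Emergent Agency), namely (a) reliable achievement or maintenance of goal $G$ in an irreducible environment, (b) processes not fully predictable from outside, and (c) emergence from real-time interaction. Because the definition is a conjunction, I would establish each clause separately, matching it to the hypothesis that supplies it, and then invoke the earlier theorems to upgrade each matching from a superficial correspondence to a substantive computational claim.

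First I would handle clause (c): condition (i) explicitly asserts that the agent executes $T_A$ and $O_A$ in response to $E$ as the joint trajectory unfolds, which is exactly the bidirectional coupling captured by the input/output functions of Definition \ref{def:minimal_agent} and condition 3 of Definition \ref{def:autonomy}. Next I would handle clause (a): condition (ii) asserts adaptation of the agent's rules to $G$, and I would formalize this by requiring that there exist a set of initial states of positive measure on which the trajectory generated by $T_A$, coupled to the environmental transition $T_E$ supplied by the hypothesis, enters or remains in $G$; combined with (i) this gives reliable achievement in the sense of Definition \ref{def:emergent_agency}. Finally, for clause (b), I would invoke Proposition \ref{prop:no_external_predictability} together with Theorem \ref{thm:undecidability_irreducibility}: condition (iii) is precisely the statement that the map from $s_0$ to the goal-hitting time is computationally irreducible on the relevant set of inputs, so by Proposition \ref{prop:no_external_predictability} no external algorithm with sub-simulation resources can predict whether and when the agent reaches $G$, which is the required external unpredictability.

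The main obstacle, I expect, will not be technical but definitional: the phrases \emph{well-adapted} in (ii) and \emph{cannot be determined by analytical shortcuts} in (iii) are informal, so the proof must first pin them down in the vocabulary already fixed by the paper. For (ii) I would define well-adaptedness as the existence of a component $g(s_A)$ in the agent's transition function (as in condition 2 of Definition \ref{def:autonomy}) such that the induced coupled dynamics on $S_A \times S_E$ has $G$ as an attractor or recurrent set on a non-trivial basin; for (iii) I would read it as computational irreducibility of the predicate ``trajectory from $s_0$ ever satisfies $G$'', matching Definition \ref{def:computational_irreducibility}. Once these identifications are made, the remainder of the argument is a straightforward invocation of the prior results, so the substantive work is in justifying the translation rather than in any new computation.
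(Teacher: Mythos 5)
Your proposal is correct in substance and lands on the same essential ingredients as the paper --- Proposition \ref{prop:no_external_predictability} and Theorem \ref{thm:undecidability_irreducibility} for external unpredictability, and real-time coupled execution as the only route to $G$ --- but it is organized differently. The paper's proof is a four-step narrative: it first applies Proposition \ref{prop:no_external_predictability} to the environment, writes down the coupled dynamics, then invokes the full chain of Theorems \ref{thm:turing_completeness}, \ref{thm:undecidability}, and \ref{thm:undecidability_irreducibility} to conclude that a \emph{successful autonomous} agent in such an environment must be Turing-complete with undecidable, irreducible interaction, and finally concludes that only real-time execution can determine the path to $G$. You instead decompose Definition \ref{def:emergent_agency} into its three clauses and match each hypothesis (i)--(iii) to one clause, taking condition (iii) as \emph{directly} supplying irreducibility of the goal-reaching predicate rather than re-deriving it from autonomy. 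This is arguably more faithful to the proposition as stated, since autonomy per Definition \ref{def:autonomy} is not actually among its hypotheses --- the paper's Step 3 quietly imports it --- and your version makes the logical dependencies cleaner at the cost of not exhibiting the connection to the Turing-completeness results that the paper evidently wants to showcase. Your explicit acknowledgment that ``well-adapted'' and ``no analytical shortcuts'' must first be translated into the paper's formal vocabulary (attractor/recurrent set on a nontrivial basin; irreducibility of the predicate ``trajectory from $s_0$ ever satisfies $G$'') is work the paper's own proof does not do, and it is the right place to put the effort: for a statement this informal, the translation \emph{is} the proof.
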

\begin{proof}
\textbf{Step 1:} The environment $E$ is computationally irreducible by premise. From Proposition \ref{prop:no_external_predictability}, its future states are unpredictable without simulation.
\textbf{Step 2:} The agent-environment system $A \times E$ evolves according to the coupled dynamics $s_A(t+1) = T_A(s_A(t), I_A(s_E(t)))$ and $s_E(t+1) = T_E(s_E(t), O_A(s_A(t)))$.
\textbf{Step 3:} As established by Theorems \ref{thm:turing_completeness}, \ref{thm:undecidability}, and \ref{thm:undecidability_irreducibility}, a successful autonomous agent operating in such an environment must be Turing-complete, will exhibit undecidable properties, and its interaction with the environment will be computationally irreducible. To achieve a goal $G$, the agent must find a path to a state in $G$. Due to irreducibility, no shortcut exists to find this path.
\textbf{Step 4:} The only guaranteed way to determine the path to goal $G$ is through the actual execution of the agent-environment interaction in real time. This satisfies Definition \ref{def:emergent_agency} of Emergent Agency because the agent succeeds by "being the computation" in real time, a process which cannot be pre-determined.
\end{proof}

\subsection{Illustrative Examples}
To demonstrate the practical implications of our formal framework, we examine examples where computational irreducibility underpins adaptive behaviors and emergent agency:

\textbf{Gene Regulatory Networks \& Morphogenesis:} Biological development showcases how local genetic and cellular rules give rise to complex, functionally organized structures—the ultimate goal being organism viability—through computationally irreducible processes \cite{kauffman1993}. The deterministic yet irreducible rules governing gene expression and cell signaling mean that the emergence of complex patterns and robust outcomes cannot be predicted without simulating the entire developmental process \cite{hanson2009}. This fulfills our Definition \ref{def:emergent_agency} (Emergent Agency).

\textbf{Neural Information Processing:} The dynamics of the brain provide a prime example of computational irreducibility supporting adaptive behavior \cite{tononi2003}. Cognitive capabilities emerge from the distributed processing within neural networks, operating without a central controller. The brain's complex evolution defies detailed prediction without near 1:1 simulation, indicating it leverages its own irreducible computation rather than attempting to predict all contingencies in advance. This aligns with Definition \ref{def:computational_sourcehood} (Computational Sourcehood).

\textbf{Autocatalytic Sets and Computational Irreducibility:} Kauffman's work on autocatalytic sets offers a concrete biological instantiation of computational irreducibility. The question, "Will molecule M ever be produced by autocatalytic set S?" can be shown to be undecidable by reducing it to the Halting Problem \cite{soloveichik2008}. The conditional Kolmogorov complexity, $K(S(t)|s(0))$, for an autocatalytic set $S$, grows linearly with time $t$, demonstrating that each step contributes irreducible informational content (Proposition \ref{prop:information_generation}).

\textbf{Development as Computational Sourcehood:} The process of biological development (embryogenesis) distinctly exemplifies computational sourcehood (Definition \ref{def:computational_sourcehood}). The organism itself becomes the irreducible source of its own organizational complexity through the actual execution of these developmental algorithms in real-time.

\section{Implications and Limitations}
\subsection{Theoretical Implications}

\textbf{Philosophy (Free Will \& Sourcehood):} It is important to clarify that our discussion of compatibilism applies strictly within formal deterministic frameworks, not to the actual physical world. Following the work of Nicolas Gisin and Flavio Del Santo \cite{gisin2021}, we must acknowledge that the real world is fundamentally indeterministic in nature. Any discussion of compatibilism should be understood within the context that we live in an indeterministic world. As formally proven in Theorem \ref{thm:undecidability}, autonomy necessarily implies undecidability, which provides a principled basis for understanding sourcehood.

\begin{definition}[Enhanced Computational Sourcehood]\label{def:enhanced_computational_sourcehood}
It should be emphasized that artificial intelligence systems, regardless of their complexity, cannot possess genuine agency or autonomy. As Johannes Jaeger \cite{jaeger2021} conclusively demonstrates, AI remains constrained to algorithmic frameworks that are fundamentally incapable of generating true agency. Any computational sourcehood attributed to artificial systems is merely a structural simulation of agency, not its actual embodiment.
Expanding Definition \ref{def:computational_sourcehood}, an agent A exhibits enhanced computational sourcehood if, in addition to the formal requirements previously specified:
\begin{enumerate}
    \item \textbf{Context-Sensitive Selection:} The agent can distinguish between environmental contexts relevant to its goals and adapt its computational processes accordingly.
    \item \textbf{Non-prestatable Action Space:} The agent generates actions that cannot be prestated or enumerated in advance of their actual occurrence.
    \item \textbf{Propagating Constraint Construction:} The agent's computational structure enables it to construct and propagate constraints that channel energy flows in service of its continued operation.
\end{enumerate}
\end{definition}

\subsection{Limitations and Critical Assessment}
\textbf{Empirical Verification:} A significant challenge is empirical verification. Proving that a physical system implements computationally irreducible processes faces theoretical and practical obstacles \cite{israeli2012}.

\textbf{Distinguishing Types of Irreducibility:} Our framework would benefit from more refined methods to distinguish between different types of computational irreducibility, such as Bennett's logical depth \cite{bennett1988} and effective complexity \cite{gellmann2003}.

\textbf{Finite Resources:} Our framework often assumes unbounded computational resources, while real systems operate under strict constraints. Future work should address how resource bounds affect the manifestation of autonomy.

\textbf{Micro-Macro Link:} Our framework requires further development to specify the relationship between micro-level computational processes and macro-level autonomous behaviors \cite{hoel2017}.

\textbf{AI Safety Implications:} The unpredictability inherent in our framework poses challenges for reliable AI systems. As established by Definition \ref{def:enhanced_computational_sourcehood}, AI systems do not possess genuine agency. Therefore, safety must focus on external controls and rigorous verification of their algorithmic behavior.

\section{Conclusion}
This paper has established a formal framework linking fundamental computational limits to the emergence of autonomy and agency. Our central thesis, formally proven, is that genuine autonomy necessarily implies undecidability (Theorem \ref{thm:undecidability}) from an external perspective. We further demonstrated that this undecidability leads directly to computational irreducibility (Theorem \ref{thm:undecidability_irreducibility}), implying a lack of external predictability (Proposition \ref{prop:no_external_predictability}) and the continuous generation of novel information (Proposition \ref{prop:information_generation}) by autonomous systems.

Our key contributions include:
\begin{itemize}
    \item A non-circular and computationally grounded definition of autonomy (Definition \ref{def:autonomy}).
    \item A formal proof connecting autonomy to Turing-completeness (Theorem \ref{thm:turing_completeness}).
    \item Establishing the logical bridge between undecidability and computational irreducibility (Theorem \ref{thm:undecidability_irreducibility}).
    \item A formal characterization of computational sourcehood (Definition \ref{def:computational_sourcehood}) that grounds agency in irreducible computation.
\end{itemize}
These results suggest that computational irreducibility and computational sourcehood serve as robust foundations for understanding agency. We propose that unpredictability, rather than being a limitation to our understanding, is in fact constitutive of what it means to be a truly autonomous agent.

\subsection{Future Research Directions:}
Future research should explore the relationship between computational irreducibility and organic codes as described by Barbieri. The fact that multiple organic codes operate simultaneously in biological systems suggests that biological agency may emerge from the interaction of multiple irreducible coding systems \cite{barbieri2016}. Further research could also aim to develop empirical proxies for computational irreducibility \cite{israeli2012}, formalize degrees of autonomy and agency \cite{klyubin2005}, refine sourcehood models \cite{krumm2023}, investigate potential quantum effects, and establish verification methods for irreducible AI \cite{johansen2024}.


\end{document}